\long\def\comment#1{}
\newcommand{\clingo}{\textsc{clingo} }
\newtheorem{theorem}{Theorem}[section]
\newtheorem{proposition}{Proposition}[section]
\newtheorem{definition}{Definition}[section]
\newenvironment{proof}{\paragraph{Proof:}}{\hfill$\square$}
\title{Mutex Graphs and Multicliques: Reducing Grounding Size for Planning}
\author{David Spies  \quad\qquad  Jia-Huai You  \quad\qquad Ryan Hayward
\institute{University of Alberta, Edmonton, Canada }
\email{\quad dnspies@gmail.com \quad\qquad jyou@ualberta.ca \quad\qquad  hayward@ualberta.ca}
}
\begin{document}
\maketitle
\begin{abstract}
We present an approach to representing large sets of {\em mutual exclusions}, 
also known as {\em mutexes or mutex constraints}. These are the types of constraints that specify the exclusion of some properties, events, processes, and so on.
They 
 are ubiquitous in many areas of applications. 
The size of these constraints for a given problem can be overwhelming enough to present a bottleneck for the solving efficiency of the underlying solver. In this paper, we propose a novel graph-theoretic technique based on {\em multicliques} for a compact representation of mutex constraints and apply it to 
domain-independent planning in ASP. As computing a minimum multiclique covering from a mutex graph is NP-hard, we propose an efficient approximation algorithm for multiclique covering and show experimentally that it generates substantially smaller grounding size for mutex constraints in ASP than the previously known work in SAT. 
\end{abstract}



\section{Introduction}

Mutual exclusion (mutex) can be traced back to concurrency control, which refers to the condition that prevents
simultaneous accesses to a shared resource. In knowledge representation, 
they specify the constraints 
that some properties cannot hold at the same time. 
For example, an object cannot be at different locations at the same time. These constraints frequently occur in applications from model-checking problems in computer-aided verification \cite{BiereCCZ99}, computer vision \cite{cvpr/MaL12,cvpr/RoyT14}, graph algorithms \cite{DBLP:books/daglib/0023091},
and AI planning \cite{rintanen2006compact}. 

The goal of this paper is to develop a graph-theoretic technique for compactly encoding large sets of mutex 
constraints and apply it to planning in ASP.  We do his by focusing on domain-independent AI planning as started out by SATPlan \cite{kautz2004satplan04}.  That is, we will first obtain an ASP planner 
by a straightforward translation from SATPlan and then study how to encode mutex constraints compactly for the planner. 


In SAT/ASP planning, mutex constraints are specified by formulas/rules that, for any state (which involves a time step, also called a {\em layer} in this paper), the actions with conflicting preconditions or 
effects, and 
the fluents that are inferred to be conflicting, are mutually exclusive. 
A naive encoding of these constraints
can certainly generate enough rules
to overwhelm the underlying solver for large planning instances. For example, in SAT planning these constraints can be expressed by 2-literal clauses (a 2-literal clause is of the form $l_1 \vee l_2$ where $l_1$ and $l_2$ are literals), which, according to \cite{rintanen2006compact},
constitute about 50-95\% of the formulas, and sometimes they used so much memory that they could not fit in a 32-bit address space. 

As shown in \cite{rintanen2006compact}, significant space-savings
can be gained by considering the way in which we encode mutex constraints.
We may
view the set of mutex constraints on fluents as an undirected graph, 
called a {\em mutex graph},
where each fluent is a vertex and each constraint is an edge.
When a
solver selects one fluent to be true at a given layer, it
infers by unit-propagation that each fluent joined directly by an edge
with the selection must be false. Thus, the set of fluents which are
true at a given layer constitutes {\em an independent set} on the mutex graph.\footnote{An independent set on a graph is a set of vertices where no two vertices
in the set share an edge \cite{robson1986algorithms}; equivalently this is a
clique in the complement graph.}

Rintanen \cite{rintanen2006compact}
shows that there exist other
smaller encodings besides the naive approach of listing out every
individual binary constraint and implies that since these encodings
are smaller, they must be superior. In their experiments, they use
instances of the AIRPORTS domain from an IPC planning competition.
This domain is notable because of the vast number of mutex constraints
it generates. The larger instances of this problem emit complex mutex
graphs which can overwhelm the underlying SAT solver if encoded naively
(in a one-constraint-per-edge fashion).

Rintanen further shows that the mutex graphs in these planning problems (even
in benchmark AIRPORTS) tend to be highly structured and that in SAT it is possible
to cover the mutex graph (somewhat more compactly) with cliques (complete
subgraphs) or with bicliques (complete bipartite subgraphs). A biclique
can be expressed in SAT using only one auxiliary variable and one binary
clause per assignment. Rintanen demonstrates that cliques can
be expressed using only a logarithmic set of bicliques. He concludes
that the best way to express a mutex graph in SAT is with a biclique
edge-covering.

In this paper, we show that for ASP, cardinality constraints
give us more power than is available in SAT and indeed we can directly
encode a mutex graph by its clique covering (without the extra cost
of a logarithmic factor), but further we can eliminate the choice
of whether to use cliques or bicliques entirely and instead cover 
the graph with {\em multicliques} (complete multi-partite subgraphs)
which is a generalization of both. Indeed, we find that with multicliques,
the number of clauses (namely ASP rules) and literals required to encode mutex constraints can be
further reduced over Rintanen's results. 

The next section provides an ASP planner as the context of dealing with mutex constraints. 
We also review the definitions of cliques/bicliques and comment on the complexity and representation issues. Section \ref{multi} then 
presents an approximation algorithm for multiclique covering and Section \ref{action-mutex} shows how to construct action mutex constraints simultaneously. In Section \ref{experiment} we present experimental results. 
Section \ref{related} comments on related work and Section \ref{conclusion} concludes the paper with final remarks. 

The ASP encodings in this paper are constructed to run on \clingo and follow the ASP-Core-2 Standard \cite{ASP-standard} except that (i) we will use ‘;’ to separate rule body atoms since the more conventional comma sign ‘,’ is overloaded and has a different meaning in more complex rules (\clingo supports both), and (ii) the disjunctive head of a rule may be written by a conditional literal.
The work reported here has been used in a recent construction of a cost-optimal planner in ASP \cite{spies2019}.

\section{Preliminaries}

\subsection{STRIPS Planning in ASP}
\label{ASPPlan}

We adopt a direct translation of 5 rules of
SATPlan \cite{kautz2004satplan04} into ASP and call the resulting planner ASPPlan.

\begin{verbatim}
rule 1.  holds(F,K) :- goal(F); finalStep(K).
rule 2.  happens(A,K-1) : add(A,F),validAct(A,K-1) :- holds(F,K); K > 0.
rule 3.  holds(F,K) :- pre(A,F); happens(A,K); validFluent(F,K).
rule 4.  :- mutexAct(A,B); happens(A,K); happens(B,K).
rule 5.  :- mutex(F,G); holds(F,K); holds(G,K).
\end{verbatim}
where $validAct(A,K)$ means that action $A$ can occur at time $K$ and $validFluent(F,K)$ means fluent $F$ can be true at time $K$.\footnote{Blum and Furst \cite{blum1997fast} give a handy way to identify for each
action and each fluent, what is the first layer at which this action/fluent
might occur by building the {\em planning graph}. Note that validAct/2 and validFluent/2 as well as predicates mutexAct/2 and mutex/2 are all extracted from the planning graph.}  Time steps used in constructing a plan are also called {\em layers}.

Rule 1 says that goals hold at the final layer.
In rule 2, if a fluent holds at layer $K$, the disjunction of actions that have
that fluent as an effect hold at layer $K-1$. The next rule says that actions at each layer imply their preconditions. The last two rules are mutex constraints: in rule 4, actions with (directly) conflicting preconditions or effects are
mutually exclusive, and in rule 5, the fluents that are inferred to be mutually exclusive are encoded as constraints.


Following SATPlan, we add to our plan ``preserving'' actions for each fluent.
The goal is to simulate the frame axioms by using the existing
machinery for having an action add a fluent that gets used some steps later. 
These preserving actions can be specified as:

\begin{verbatim}
action(preserve(F)) :- fluent(F).
pre(preserve(F),F) :- fluent(F).
add(preserve(F),F) :- fluent(F).
\end{verbatim}
where each fluent $F$ has a corresponding {\em preserving action} denoted by term $preserve(F)$. Preserving actions can be easily distinguished from regular actions. 
Now that an action occurs at time $K$ indicates that
its add-effect $F$ will hold at time $K+1$.

Note that the reason why rule 5 of ASPPlan prevents fluents from being deleted before they're
used is a bit subtle.
In order for a fluent to hold, it must occur in conjunction with a preserving
action at each time step it's held for. A preserving action has that fluent as a
precondition and so would be mutex with any action that has it as a delete
effect. This means that deleting actions cannot occur as long as that fluent
is held (by rule 4).\footnote{As a further note, when 
PDDL (planning domain definition language) without any extensions is defined, goals can only be positive and actions
can only have positive preconditions. There is a {\tt :negative-preconditions}
extension to PDDL, but we didn't use it.
Any problem which uses :negative-preconditions can be trivially
adapted to avoid using it by adding a fluent :not-F for every fluent :F and then
adding a corresponding add-effect wherever there's a delete-effect and vice
versa.}

Like SATPlan, we run this planner by solving at some initial makespan $K$, where $K$ is the first layer at which $validFluent(F,K)$ holds for all $ goal(F)$,
and if it is UNSAT, we increment $finalStep$ by 1 until we find a plan.

This is a straightforward and unsurprising encoding in every respect,
but has a somewhat surprising consequence as compared to SATPlan. Because ASP models are
stable, for any fluent $F$, $holds(F,K)$ can
only be true if there exists some action which requires its truth
as per rule 3. Similarly for actions as per rule 2.
Furthermore, since rule 2 is disjunctive at every step, the set of actions which occurs is a minimal
set required to support the fluents at the subsequent step. This conforms
exactly to the approach to planning by Blum and Furst  \cite{blum1997fast}: First build
the planning graph, then start from the goal-state planning backwards,
at each step selecting a minimal set of actions necessary to add all
the preconditions for the current set of actions. That is, in this ASP translation, the neededness-analysis as carried out in \cite{robinson2008compact} is accomplished automatically during  grounding
or
during the search for stable models. 


\medskip
{\bf Smart Encoding of Action Mutexes:}
Let us first consider action mutex constraints as expressed by 
Rule 4 of ASPPlan, which 
can blow up in size when grounded because nearly any two actions 
acting on the same
fluent can be considered directly conflicting.
For example, assume a planning problem in which there is a crane
which we must use to load boxes onto freighters and there are many
boxes and many freighters available but only one crane. Then we will
have one such constraint for every two actions of the form,
$load(Crate,Freighter)$,
 for any crate and any freighter. As there is already a quadratic number of actions in the problem description size ($crates \times freighters$),
the number of
mutex constraints over \emph{pairs} of actions is \emph{quartic} in the initial
(non-ground) problem description size.

We would like to avoid such an explosion by introducing new predicates
to keep the problem size down. We will only consider two actions to
be mutex if one deletes the other's precondition. But we will take
extra steps to ensure that no add-effect is later used if the same
fluent is also deleted at that step. Here is the revised encoding of
rule 4.

\begin{verbatim}
used_preserved(F,K) :- happens(A,K); pre(A,F); not del(A,F).
deleted_unused(F,K) :- happens(A,K); del(A,F); not pre(A,F).
:- {used_preserved(F,K); deleted_unused(F,K);
    happens(A,K) : pre(A,F), del(A,F)} > 1; valid_at(F,K).

deleted(F,K) :- happens(A,K); del(A,F).
:- holds(F,K); deleted(F,K-1).
\end{verbatim}

Effectively, we are splitting the ways in which we care that an action
$A$ can relate to a fluent $ F$ into three different cases: 
(i) $A$ has $F$ as a precondition, but not a delete-effect;
(ii) $A$ has $F$ as a delete-effect, but not a precondition; and 
(iii) $A$ has $F$ as both a precondition and a delete-effect.



By explicitly creating two new predicates for properties (i) and (ii),
we have packed this restriction into one big cardinality constraint.
Further, we must account for conflicting effects, so we define one
more predicate ({\tt deleted/2}) which encapsulates the union of all actions
from properties 2 and 3 (those that delete $F$) and assert that $F$ cannot
hold at this step if any of those actions occurred in the previous
one.\footnote{There is a minor difference between the definition of mutex as given in \cite{blum1997fast}, which appears to be overly restrictive, and the definition we're using. Whereas graphplan treats any two actions as mutex if they have conflicting effects (one adds a fluent which the other deletes), we only consider them to be mutex if they have conflicting effects and the add-effect is used at that layer. So we allow actions to occur simultaneously with conflicting effects as long as the relevant fluent doesn't hold afterwards.}

\subsection{Cliques and Bicliques}
\label{clique}

We review the definitions of cliques and bicliques and comment on their possible encodings in SAT. 

Let $G = (V,E)$ be an undirected graph. A {\em clique} is a subgraph $(C,E')$ of $G$ such that $C \subseteq V$ and $E' = \{(v,u) \in E \,|\, v, u \in C, u \not = v\}$.  A {\em biclique} is a subgraph $(C, C', E')$ of $G$ such that $C, C' \subseteq V$, $C \cap C' = \emptyset$, and $E' = \{(u,v) \in E \,|\, u \in C, v \in C'\}$.

That is, cliques are complete subgraphs of a graph and 
 bicliques are complete bipartite subgraphs of a graph. Deciding if a graph has a clique of size $n$ is known to be NP-complete \cite{GareyJ79,DBLP:conf/coco/Karp72}.  
This is also the case for bicliques under several size measures \cite{GareyJ79,Peeters03,Yannakakis78}. There are approximation algorithms for the computation of cliques and bicliques, with approximation guarantees 
\cite{Hochbaum98}, or without \cite{rintanen2006compact}.

In SAT, given $n$ fluents, besides the naive $O(n^2)$ size representation, cliques can be represented in size $O(n)$ using $O(n)$ many auxiliary variables, or in size $O(n \log n)$ using only $O(\log n)$ many auxiliary variables. Bicliques enjoy a more compact representation: if $C$ and $C'$ form a biclique, then $|C| \times |C'|$ many
binary constraints can  be represented by $|C| + |C'|$ many 2-literal clauses using only one auxiliary variable \cite{rintanen2006compact}. The idea is that for any literals $l \in C$ and $l' \in C'$, mutex constraints of the form $l \vee l'$ can all be represented using one new variable, say  $x$, by $\neg l \rightarrow x$ and $x \rightarrow l'$.

\section{An Approximation Algorithm for Multiclique Covering}
\label{multi}

In this section, we formulate a polynomial-time, approximation algorithm for multiclique covering. 
First, let us have a formal definition of multiclique.
\begin{definition}
Let $G = (V,E)$ be an undirected graph. 
A {\em multiclique} of $G$ is a subgraph $(C_1, \dots, C_k, E')$ of $G$, 
such that $C_1 \cup \dots \cup C_k \subseteq V$, $ C_i \cap C_j =\emptyset$ for all $1 \leq i,j\leq k$ where $i \not = j$, and 
$E' = \{(u,v) \in E \,|\, u \in C_i, v \in C_j, i \not = j\}$.

We call each $C_i~(1 \leq i \leq k)$ above a {\em partition}. 
\end{definition}


\begin{proposition}
A multiclique is a graph whose complement is a cluster graph, i.e., a set of disjoint cliques.
\end{proposition}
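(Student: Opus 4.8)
The plan is to prove the two implications separately, in each case by simply unwinding the definition of the edge set and comparing it with that of the complement, where throughout the complement is taken relative to the common vertex set $V' = C_1 \cup \dots \cup C_k$.

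For the forward direction I would start from a multiclique $M = (C_1, \dots, C_k, E')$. By definition $E'$ consists of exactly those pairs $\{u,v\}$ whose endpoints lie in \emph{distinct} parts. Hence the complement of $M$ on $V'$ contains exactly those pairs $\{u,v\}$ with $u$ and $v$ in the \emph{same} part $C_i$. Since the parts are pairwise disjoint and cover $V'$, this says precisely that the complement is the vertex-disjoint union of the complete graphs induced on $C_1, \dots, C_k$, i.e.\ a cluster graph. For the converse I would take an arbitrary cluster graph $H$, write it as the disjoint union of its connected components $C_1, \dots, C_k$ (each a clique), and observe that its complement on the same vertex set has an edge $\{u,v\}$ exactly when $u$ and $v$ lie in different components; taking $E'$ to be this edge set exhibits the complement of $H$ as the multiclique with parts $C_1, \dots, C_k$, matching the definition verbatim.

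The few points that need care are bookkeeping rather than real obstacles: one must state explicitly that complementation is performed relative to $C_1 \cup \dots \cup C_k$, so that no extra vertices are introduced; and one should note that empty or singleton parts cause no trouble — an empty part contributes nothing, and a singleton part contributes a one-vertex clique (an isolated vertex) to the complement, which is consistent with both sides. I do not expect any substantive difficulty here, since the statement is essentially a restatement of the definition, and the proof amounts to a direct translation between the two edge-set descriptions.
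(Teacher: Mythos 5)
Your proof is correct and follows essentially the same route as the paper's: both directions are handled by directly unwinding the definition, observing that the complement (taken on $C_1 \cup \dots \cup C_k$) has an edge exactly when both endpoints lie in the same part, so it is the disjoint union of the cliques on the parts, and conversely. Your version is slightly more careful about specifying the vertex set over which complementation is taken, but the substance is identical.
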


The claim is easy to verify. 
Consider any graph $G$ which is a multiclique by definition.
In the complement graph $G^{c}$, every partition is a clique. Further,
since any two vertices $u$ and $v$ must have an edge if they belong
to separate partitions in $G$, it follows that there are no edges
between partitions in $G^{c}$, therefore, the only edges in $G^{c}$
belong to cliques.
Similarly, if $G^{c}$ is a cluster-graph, then the connected components
form the partitions in $G$ as a multiclique.

Given a mutex graph, a naive encoding of mutex constraints in ASP is to list each edge between two vertices by a 2-literal constraint.  With a multiclique covering, mutex constraints in a mutex graph can be encoded compactly. 

Given a graph $G = (V,E)$, the goal of {\em multiclique covering} is to produce a sequence of multicliques $\Pi = (S_1, \dots, S_n)$ for some $n$, where each $S_i$ is a multiclique subgraph of $G$, for all $j > i$, $S_j$ contains at least one edge not in $S_i$, and the union of edges in $S_k~(1\leq k \leq n)$ is $E$.  In the multiclique covering $\Pi$, $S_i$ and $S_j$ may share some vertices. In general, to cover all mutex constraints in a mutex graph, the edges covered in different multicliques in $\Pi$ need not be non-overlapping. In our algorithm, we do allow overlapping if it leads to more compact representation. In summary, as the edges in a mutex graph represent constraints, multiclique covering is to cover the edges of the mutex graph where the edges are spread out in multicliques that are constructed.

For each multiclique constructed, 
we can encode a constraint graph in ASP
as:

\begin{verbatim}
% Covering is given by inPartition(F,P) if fluent F belongs to partition P, 
% and inMulticlique(P,M) if partition P belongs to multiclique M.
% p(P,K): P is a partition at layer K.
partitionHolds(P,K) :- holds(F,K); inPartition(F,P).
:- {p(P,K): partitionHolds(P,K),inMulticlique(P,M)} > 1;
   multiclique(M); layer(K).
\end{verbatim}

Here we have a cardinality constraint expressing the rule that among
all partitions $P$ of multiclique $M$, at most one holds at layer
$K$. Furthermore, if any fluent $F$ holds at layer $K$, so does its
partition $P$.

Additionally, we can avoid some unnecessary rules by handling
singleton partitions
specially. A singleton partition can be packed \emph{directly} into the cardinality
constraint rather than introduced through an auxiliary atom:
\begin{verbatim}
:- {partitionHolds(P,K):inMulticlique(P,M);
      holds(F,K):singletonPartitionOf(F,M)} > 1;
    multiclique(M); layer(K).
\end{verbatim}


Now, our ASPPlan given in Section \ref{ASPPlan} is updated by replacing Rule 5 therein with the above rules. 

Hence, 
given a planning instance, if we can construct a
multiclique covering from its mutex graph, we can use ASP to encode these constraints compactly.
Now let us find an algorithm for this task.

In general, finding a minimum multiclique covering (using as few multicliques as
possible) is NP-hard.
To see this, consider the problem of finding a minimum multiclique
covering on a bipartite graph. It's easy to see that a multiclique on a
bipartite graph is a biclique. Thus the minimum multiclique covering of a
bipartite graph is the minimum biclique covering. The size of the minimum
biclique covering of a bipartite graph is also known as its
\emph{bipartite dimension}. Finding the bipartite dimension
of a graph is known to be NP-hard \cite{amilhastre1997computing}.
Thus, finding a minimum
multiclique cover is also NP-hard. 

Nonetheless, we can still use approximation algorithms similar to
those used in \cite{Hochbaum98}. One critical observation
is that under the restriction that a multiclique must use
exactly a particular set of vertices, there is always only one optimal
way to partition those vertices into a multiclique to cover a maximal
set of edges:
If there is a path between two vertices $v$ and $w$ in the complement
of the induced graph, then they must belong to the same partition.
If there is no path, then we might as well put them in separate partitions.
Therefore, the best partition is the one which makes a partition for
each connected component in the complement of the induced graph.

\begin{algorithm}[t]
\label{covering0}
\caption{Multiclique Covering}
\begin{algorithmic}[1]
\Procedure{find\_cover}{$g :: \mbox{Graph}$} $\rightarrow$ Set MultiClique
  \State $\mbox{var}\ uncovered \gets g.edges :: \mbox{Set Edge}$
  \State $\mbox{var}\ multicliques \leftarrow  {\{\} :: \mbox{Set MultiClique}}$

  \While{$uncovered.nonempty$} 
      \State   $new\_multiclique \leftarrow \textsc{next\_multiclique}()$ 
      \State  $multicliques \leftarrow multicliques \cup \{new\_multiclique\}$ 
      \State  $uncovered\leftarrow uncovered \setminus \textsc{edges\_covered\_by}(new\_multiclique)$
  \EndWhile
  \State \Return multicliques
\EndProcedure
\end{algorithmic}
\end{algorithm}

Let us use an example to illustrate.
Consider the mutex graph $G= (V,E)$ on the left of the figure below and its complement graph $G^c$ on the right. The connected components of $G^c$ give us a multiclique $\{\{a,b,d\}, \{c\}, \{e\}\}$, which covers almost all edges in $E$ except edge $(a,b)$. So edge $(a,b) \in E$ will have to be captured in another multiclique. 

\vspace{-0.2in}
\begin{figure} [h!]
\label{graph0}
\begin{center}
\begin{tikzpicture}

\node[shape=circle,draw=black,scale=0.8] (v1) at (-8,1) {$a$};
\node[shape=circle,draw=black,scale=0.8] (v2) at (-6,1) {$e$};
\node[shape=circle,draw=black,scale=0.71] (v3) at (-9,0) {$b$};
\node[shape=circle,draw=black,scale=0.8] (v4) at (-7,0) {$c$};
\node[shape=circle,draw=black,scale=0.69] (v5) at (-5,0) {$d$};

\node[shape=circle,draw=black,scale=0.8] (v6) at (-2,1) {$a$};
\node[shape=circle,draw=black,scale=0.8] (v7) at (0,1) {$e$};
\node[shape=circle,draw=black,scale=0.71] (v8) at (-3,0) {$b$};
\node[shape=circle,draw=black,scale=0.8] (v9) at (-1,0) {$c$};
\node[shape=circle,draw=black,scale=0.69] (v10) at (1,0) {$d$};

\draw (v6) edge node{} (v10);
\draw (v8) edge [bend right] node{} (v10);
\draw (v1) edge node{} (v2);
\draw (v1) edge node{} (v3);
\draw (v1) edge node{} (v4);
\draw (v2) edge node{} (v4);
\draw (v2) edge node{} (v5);
\draw (v3) edge node{} (v4);
\draw (v4) edge node{} (v5);
\draw (v2) edge [bend right=75] node{} (v3);
\end{tikzpicture}
\begin{quote}
\vspace{-.1in}
\caption{An example mutex graph and its complement.}
\end{quote}
\end{center}
\end{figure}
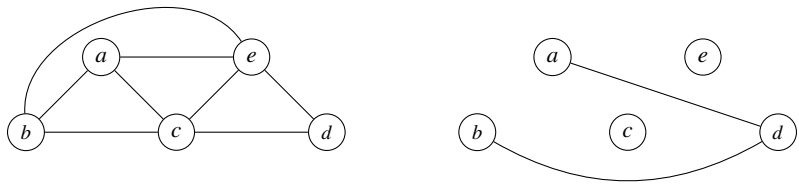

\vspace{-0.3in}

Our algorithm is given in Algorithm 1, with supporting functions given in Algorithm 2. The algorithm
is greedy, simple, and polynomial-time. We track the set of uncovered
edges and tack multicliques on one at a time, greedily building each multiclique
in such a way so as to maximize the difference $\phi_1 - \phi_2$, where
$\phi_1$ is the number of literals in the naive encoding and $\phi_2$ is the number of literals in our ASP encoding of the corresponding multiclique. 
A difference indicates an encoding reduction.

\begin{algorithm}[t!]
\label{alg:multiclique-cover}
\caption{Multiclique Covering Helper Functions}
\begin{algorithmic}[1]

\State type MCPartition = Set Vertex
\State type MultiClique = Set MCPartition

\Function{make\_multiclique}{$vs :: \mbox{Set Vertex}$} $\rightarrow$  MultiClique
  \State \Return $g.induced\_subgraph(vs).complement().connected\_components()$
\EndFunction

\Function{edges\_covered\_by}{$mc :: \mbox{MultiClique}$} $\rightarrow$ Edge
  \State \Return $\{(x,y) \mid  p \in mc,  q \in mc,  p\not = q,  x \in p,  y \in q\}$
\EndFunction

\Function{count\_uncovered\_incident\_edges}{$x :: \mbox{Vertex}$} $\rightarrow$ $\mathbb{N}$
  \State $|(g.incident\_edges(x) \cap uncovered)|$
\EndFunction

\Procedure{defaults\_for}{$vs :: \mbox{Set Vertex}$}$\rightarrow$  MCPartition
  \State $candidates\leftarrow  \bigcap \{g.neighbors(v) \mid v \in vs\}$ :: Set Vertex
  \State \Return $\{c \mid c \in candidates, ~|g.incident\_edges(c) \cap  uncovered| \geq 2\}$
\EndProcedure

\Procedure{score}{$vs :: \mbox{Set Vertex}$} $\rightarrow$ $\mathbb{Z}$
  \State $multiclique$ :: MultiClique
  \State $multiclique \leftarrow \textsc{make\_multiclique}(vs) \cup {\textsc{defaults\_for}(vs)}$
  \State $newly\_covered :: \mbox{Set Edge}$
  \State $newly\_covered\leftarrow \textsc{edges\_covered\_by}(multiclique) \cap uncovered$
  \State $complexity\_cost$ :: $\mathbb{Z}$
  \State $complexity\_cost\leftarrow  \displaystyle\sum_{p ∈ multiclique}{\begin{cases}
    1 & \mbox{if } |p| = 1\\
    2*|p|+1 & \mbox{if } |p| > 1
  \end{cases}}$
  \State \Return $2 * |newly\_covered| - complexity\_cost$
\EndProcedure

\Procedure{next\_multiclique}{} $\rightarrow$  MultiClique
  \State $first\_vertex :: \mbox{Vertex}$
  \State $first\_vertex\leftarrow {\arg\max}_{g.vertices}{(\lambda w . \ \textsc{count\_uncovered\_incident\_edges}(w))}$
  \State $\mbox{var}\ vertex\_set\leftarrow \{first\_vertex\} :: \mbox{Set Vertex}$
  \Repeat
    \State $next :: \mbox{Vertex}$
    \State $next\leftarrow {\arg\max}_{g.vertices}{(\lambda w.\ \textsc{score}(vertex\_set \cup \{w\}))}$
    \State $improved\leftarrow \textsc{score}(vertex\_set \cup \{next\}) > \textsc{score}(vertex\_set)$
    \If{$improved$}
      \State $vertex\_set\leftarrow vertex\_set \cup \{next\}$
    \EndIf
  \Until{$¬improved$}
  \State \Return {$\textsc{make\_multiclique}(vertex\_set \cup \textsc{defaults\_for}(vertex\_set))$}
\EndProcedure
\end{algorithmic}
\end{algorithm}

For more details, in Algorithm 1, the variable {\em multicliques} is empty to start with.
Then it iteratively adds one new multiclique at a time until all edges are covered.

\newpage
In the helper function \textsc{next\_multiclique} in Algorithm 2,
we select the first vertex by finding the one incident to the most
uncovered edges.  This is accomplished at Line 27 (line numbers below all refer to Algorithm 2), where we use a lambda function which is applied to each vertex for the parameter $w$.
We then repeatedly select each subsequent vertex to greedily
maximize the size difference mentioned above under the assumption that we will finish by adding
on a ``default partition'' of vertices, until no improvement can be generated
(lines 29-36 of Algorithm 2). The default partition consists
of all vertices which have an edge to every vertex we have  selected
so far including at least two edges not yet covered 
(lines 12-15).\footnote{If there is only one, there will be no savings in encoding size, as it would require the same number of literals/rules to include a vertex in a partition.}


Given a set of vertices $vs$, the function \textsc{make\_multiclique}$(vs)$ generates a multiclique, where the partitions are obtained by finding the connected components of the complement graph induced from $vs$, along with the covered edges (lines 3-5). 

Note that, instead of \emph{removing} edges from the graph once they've been assigned to a multiclique, we keep a separate record of ``uncovered'' edges which still remain to be assigned. In this way the same edge may be covered twice by different multicliques if that helps minimize the encoding (cf. line 27).

\begin{theorem}
Given a mutex graph $G=(V,E)$, the algorithm \textsc{find\_cover} terminates after a number of execution steps in polynomial time in the size of $G$, and after termination, a sequence of multicliques 
$\{(V_1,E_1), \dots, (V_n, E_n)\}$ is generated such that $V_1 \cup \dots \cup V_n \subseteq  V$ and 
 $E_1 \cup \dots \cup E_n = E$.
\end{theorem}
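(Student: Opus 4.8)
The plan is to prove the theorem in two parts: termination in polynomial time, and correctness of the output (the vertex-set inclusion and the edge-cover equality).

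\textbf{Termination and polynomial running time.} First I would argue that the \texttt{while} loop of \textsc{find\_cover} runs at most $|E|$ times. The key invariant is that each call to \textsc{next\_multiclique} returns a multiclique covering at least one previously uncovered edge, so $|uncovered|$ strictly decreases each iteration. To see that at least one uncovered edge is always newly covered, I would note that \textsc{next\_multiclique} is seeded with $first\_vertex$, the vertex incident to the most uncovered edges; since $uncovered$ is nonempty, this count is at least $1$. The greedy loop only grows $vertex\_set$ when \textsc{score} strictly improves, so the final $vertex\_set$ has \textsc{score} at least that of the singleton $\{first\_vertex\}$, which together with its default partition forms a multiclique covering all uncovered edges incident to $first\_vertex$ — at least one. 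Hence at least one edge leaves $uncovered$ per iteration, bounding the loop by $|E|$. For the per-iteration cost, I would observe that \textsc{next\_multiclique}'s inner \texttt{repeat} loop also adds a vertex only on strict improvement of a bounded integer score, and \textsc{score} is bounded above by $2|E|$ and below by $-2|V|$ (roughly), so the loop runs $O(|E|+|V|)$ times; each iteration evaluates \textsc{score} over all $|V|$ candidate vertices, and each \textsc{score} call does a connected-components computation on an induced complement subgraph, which is polynomial in $|V|+|E|$. Multiplying these polynomial bounds gives an overall polynomial bound in the size of $G$.

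\textbf{Correctness of the output.} For the claim $V_1 \cup \dots \cup V_n \subseteq V$: every multiclique is produced by \textsc{make\_multiclique}$(vs)$ for some $vs \subseteq V$, and taking the complement, inducing a subgraph, and splitting into connected components never introduces vertices outside $vs$, so each $V_i \subseteq V$. For the edge equality $E_1 \cup \dots \cup E_n = E$: the inclusion $\bigcup_i E_i \subseteq E$ follows because \textsc{make\_multiclique} returns a subgraph of $G$ (the definition of multiclique of $G$ forces $E' \subseteq E$). The reverse inclusion $E \subseteq \bigcup_i E_i$ is exactly the loop's termination condition: the loop exits only when $uncovered$ is empty, and the loop body maintains the invariant $uncovered = E \setminus (\text{edges covered so far})$; combining this with termination (established above) gives that every edge of $E$ is covered by some $E_i$.

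\textbf{Main obstacle.} The routine parts are the subset/subgraph claims; the delicate point is pinning down exactly \emph{why} \textsc{next\_multiclique} is guaranteed to cover a fresh edge, since the procedure builds the multiclique with \textsc{make\_multiclique}$(vertex\_set \cup \textsc{defaults\_for}(vertex\_set))$ at the end rather than the set scored during the loop. I would need to check that this final construction covers at least the uncovered edges incident to $first\_vertex$ — which holds because $first\_vertex \in vertex\_set$ and any uncovered-edge endpoint of $first\_vertex$ either ends up in a different component of the induced complement (hence a different partition, so the edge is covered) or, if it lies in the same component as $first\_vertex$, that can only happen through non-neighbors in $G$, contradicting adjacency. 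Nailing this case analysis cleanly, and likewise bounding the \textsc{score} range to justify the inner-loop termination, are the two places requiring genuine care; the rest is bookkeeping on the loop invariant.
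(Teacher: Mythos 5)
Your overall skeleton --- bounding the \texttt{while} loop by $|E|$ via ``each new multiclique covers a fresh uncovered edge,'' a polynomial per-call cost, and the loop invariant giving $E_1 \cup \dots \cup E_n = E$ --- is the same as the paper's, and the subset and inclusion claims are handled the same way. The gap is in the one step you yourself flag as delicate: \emph{why} \textsc{next\_multiclique} must cover a fresh edge. You argue that the singleton $\{first\_vertex\}$ ``together with its default partition forms a multiclique covering all uncovered edges incident to $first\_vertex$.'' That is not true: \textsc{defaults\_for} admits only common neighbors with \emph{at least two} uncovered incident edges, so an endpoint $y$ of an uncovered edge $(first\_vertex,y)$ whose only uncovered incident edge is that one is excluded from the default partition, and nothing in the greedy loop forces $y$ to be added either; in that situation the singleton's score is negative and your lower bound ``final score $\geq$ singleton's score'' yields no covered edge. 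Your ``main obstacle'' case analysis inherits the same flaw: it assumes every uncovered-edge endpoint of $first\_vertex$ ``ends up in'' some component of the induced complement, but a vertex that is not in $vertex\_set \cup \textsc{defaults\_for}(vertex\_set)$ does not appear in the induced subgraph at all, so that edge is simply not covered by the returned multiclique.

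The paper closes this step differently. For any uncovered edge $(first\_vertex,w)$, the two-vertex candidate set $\{first\_vertex,w\}$ scores at least $0$ (two singleton partitions of cost $1$ each against one newly covered edge worth $2$), so the $\arg\max$ on the first greedy iteration attains score $\geq 0$ whenever the singleton scores below $0$; since the loop only ever strictly increases the score, the final score is $\geq 0$, and because $complexity\_cost \geq 1$ for any nonempty vertex set, a nonnegative score forces $|newly\_covered| \geq 1$. This argument does not care whether the covered edge is incident to $first\_vertex$, which is precisely what lets it bypass your case analysis about which endpoints land in which partition. The remainder of your proof (the $|E|$ iteration bound, the polynomial cost of each \textsc{score} evaluation, and the set-theoretic bookkeeping for $V_i \subseteq V$ and the cover equality) matches the paper and is fine.
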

\begin{proof}
First, we verify that each $(V_i, E_i)~(1 \leq i \leq n)$ is a multiclique. $V_i$ is returned as a set of vertices by \textsc{next\_multiclique} and partitioned by \textsc{make\_multiclique} into $C_1, \dots C_k$ satisfying the following statement: for any $i \not = j$, $v \in C_i$
and $v' \in C_j$ iff there is no path between $v$ and $v'$ in the complement of the graph induced from $V_i$ iff there is an edge between $v$ and $v'$ in the given mutex graph. Hence, each vertex in any partition is connected to every vertex in a different partition. Then, to obtain a multiclique, we only need to let $E_i$ be 
the set of edges that connect vertices of different partitions. 

The algorithm terminates since each $E_j$ covers at least one of the uncovered edges. The first vertex is selected such that it maximizes the number of uncovered edges to which it's incident, so as long as there are uncovered edges, we're guaranteed to select a first vertex which is incident to at least one of them. Trivially we can extend this to a multiclique which covers an uncovered edge by selecting the vertex on the other side of any one of them for a score of at least zero. Since the score of the multiclique is only allowed to improve from there and the score measures the number of uncovered edges we've covered, it must be the case that every multiclique will cover at least one new uncovered edge (otherwise its score would be negative).


The claim on polynomial time holds because the number of multicliques is bounded by $|E|$ and 
there are at most $|E|$ calls to \textsc{next\_multiclique}; further, it can be  easily checked that the computation of each such call takes polynomial time. 
\end{proof}

\medskip \medskip
Let's take a look at how this behaves on an example graph. We'll start with a
mutex graph for a ferry crossing problem in which we have three islands, a ferry
and a car. The ferry can be at any of the three islands and it can have just
moved or be in the process of loading. The car can be on the ferry or at one of
the three islands. If loading then the car is not currently on the ferry.
Figure \ref{ferry1} shows what the mutex graph for the problem looks like.
\begin{figure}[t]
\includegraphics[scale=0.5]{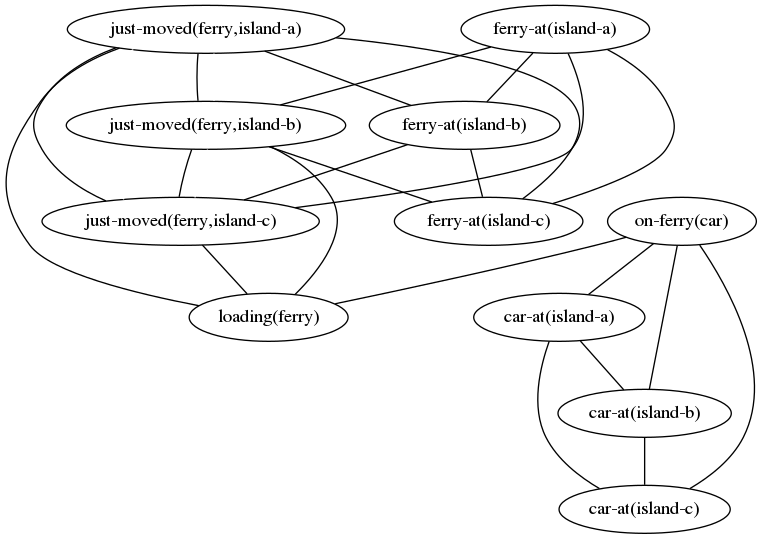}
\caption{Mutex graph for the ferry problem.}
\label{ferry1}
\end{figure}
 
\newpage
Now let's run our multiclique cover algorithm on it. We get:

\begin{verbatim}
% Multiclique 0 has all singleton parts
:- {holds(just_moved(ferry,island_a),T);
    holds(just_moved(ferry,island_b),T);
    holds(just_moved(ferry, island_c),T);
    holds(loading(ferry),T)
    } > 1; step(T).

% Multiclique 1 has all singleton parts
:- {holds(car_at(island_a),T);
    holds(car_at(island_b),T);
    holds(car_at(island_c),T);
    holds(on_ferry(car),T)
    } > 1; step(T).

% Multiclique 2 has three non-singleton partitions
partitionHolds(part(2,0),T) :- holds(ferry_at(island_a),T).
partitionHolds(part(2,0),T) :- holds(just_moved(ferry,island_a),T).
partitionHolds(part(2,1),T) :- holds(ferry_at(island_b),T).
partitionHolds(part(2,1),T) :- holds(just_moved(ferry,island_b),T).
partitionHolds(part(2,2),T) :- holds(ferry_at(island_c),T).
partitionHolds(part(2,2),T) :- holds(just_moved(ferry,island_c),T).
:- {partitionHolds(part(2,0),T);
    partitionHolds(part(2,1),T);
    partitionHolds(part(2,2),T)} > 1; step(T).

% Multiclique 3 has two singleton parts and so is just a normal
% mutex constraint.
:- holds(loading(ferry),T); holds(on_ferry(car),T).
\end{verbatim}

In total we have (per-layer) a grounded 10 rules with 25 literals. Had we used
the naive encoding it would have been 22 rules with 44 literals so we can see
this encoding is quite a bit more compact.

To give a better picture, in Figure \ref{colored}
we color each edge with the multiclique to which it
belongs. Note that three of the edges ended up in two distinct multicliques and
so are duplicated in the image: 

\begin{figure}[H]
\label{colored}
\includegraphics[scale=0.5]{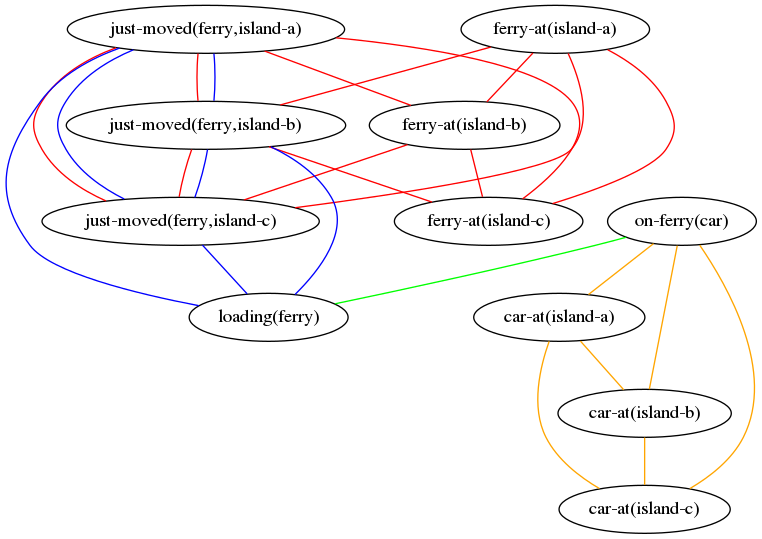}
\caption{Colored multiclique covering for the ferry problem.}
\end{figure}

\section{Eventual Fluent Mutex Constraints}
\label{action-mutex}
In Section \ref{ASPPlan} we found a way for the ASP solver
to avoid explicitly dealing with action mutex constraints and so were
able to save on grounded encoding space. But we still have a problem because
the algorithm presented by Blum and Furst \cite{blum1997fast} for \emph{generating} fluent
mutex constraints in the first place requires simultaneously constructing action
mutex constraints.

Indeed, Rintanen \cite{rintanen2006compact} reports being unable to run experiments
on the largest AIRPORTS instances from IPC-2004 because the action mutex
constraints used so much memory they wouldn't fit in a 32-bit address space.

In this section, we find a way to circumvent this problem and were able
to generate mutex constraints on the very largest (AIRPORTS-50) instance
while using only about a gigabyte of memory.

Mutex constraints as defined in \cite{blum1997fast} are ``per-layer''.
You determine the set of mutex constraints \emph{at each layer} by
looking at what actions, fluents and mutex constraints were in the previous
layer. Two actions are mutex if they are directly mutex or have any mutex
preconditions. Two fluents are mutex if all respective pairs of causing actions
are mutex. However, suppose we only care to discover and encode which fluents
are \emph{always} mutex in the sense that for \emph{every} layer up to an
arbitrarily large makespan they cannot both be true.

One way to obtain this set is to build the planning graph outward until the set
of mutex constraints stabilizes. That is, we can stop once we find two consecutive
layers at which the set of mutex constraints doesn't change. But this would still
require tracking action mutex constraints for all pairs of actions.

The key insight is that fluents which are always mutex will be so
in \emph{sequential} planning (where exactly one action happens at each layer) as well as in parallel planning.
A parallel plan is just a way of compressing a sequential plan into fewer steps
so the set of pairs of things which can be true at some point will be the same
regardless of how we express it.

Since a sequential plan can be expressed as a parallel plan where at most one
non-preserving action happens at each layer, we can run the mutex generation
algorithm under the assumption that \emph{all} non-preserving actions are mutex
with each other. Then we only need to explicitly keep track of which actions are
mutex with each of the \emph{preserving} actions. There are generally
significantly fewer preserving actions than total actions. When the set of
mutex fluent-pairs stabilizes, it should
come out the same as if we had obtained these pairs by building the planning
graph normally and waiting for the mutex fluents to stabilize.

\section{Experiments}
\label{experiment}

We implemented the multiclique generation algorithm in Haskell, representing
a fluent or action as an $Int$ and a collection of mutex constraints as an
$IntMap IntSet$. Both $IntMap$ and $IntSet$ come from the $containers$
package. A partition of a multiclique was represented as an $IntSet$,
a multiclique as a list of partitions, and a multiclique covering as a list of
multicliques.

We ran this algorithm on the same instances as Rintanen (as well as on the
AIRPORTS-50 instance, the largest problem in the set) and found
a significant improvement over his results. Note that these edge-counts do \emph{not} take into account neededness. That
is, they cover many fluents and actions which are irrelevant to the goal of the problem
and are guaranteed not to be explored by the solver.
When we accounted for neededness we found the graphs got much smaller (approximately
5-fold).
But we chose not to utilize this so that our results would be better
comparable to Rintanen's.

In Table \ref{tab:Multiclique-Reduction-for},
``Edges'' is the number of edges in the mutex graph for each instance.
``CL''  is the number of grounded clauses (rules) we used
to encode this graph. These
clauses are a mix of binary constraints and ``at most 1'' cardinality
constraints. Because not all the clauses are binary, we are compelled
to give the sum number of literals among all the constraints. This
is the ``Lit'' column.

During our experiments, after a look at a couple of example instances, it became immediately
clear to us that the majority of edges belong to the first few multicliques
found. After that the number of edges covered per clause drops off rapidly. Thus,
if we are willing to forget a small percentage of the edges, we can
reduce the number of clauses necessary to encode the graph much further.
For each instance, we reran the multiclique generation algorithm terminating it
as soon as it had covered 90\% of the total number of edges.\footnote{In our coding for the experiments, similar to edges, the uncovered edges are just represented by another {\em IntMapIntSet}. }
The resulting numbers of edges covered,
clauses, and literals required are given respectively by the columns
``Edges*'', ``CL*'', and ``Lit*''.
``R-Lit'' gives the number of literals required for Rintanen's biclique encoding.
It's twice the number of constraints he reports \cite{rintanen2006compact}
since all his constraints are binary clauses (having exactly two literals).

\begin{table}
\normalsize
\label{tab:Multiclique-Reduction-for}
\caption{Multiclique Reduction for AIRPORTS (Abbreviated AP)}
\vspace{.1in}
\centering 
\begin{tabular}{|c||c|c|c|c|c|c|c|}
\cline{1-8}
Instance & Edges & CL & {\bf Lit} & Edges* &  CL* & {\bf Lit*} & {\bf R-Lit}\\
\cline{1-8}
AP-21 & 181884 & 7531 & 16437 & 166229 & 2336 & 4783 & 26382 \\
AP-22 & 275515 & 11310 & 25014 & 249173 & 3464 & 7104 & 42776\\
AP-23 & 371062 & 14969 & 33100 & 336209 & 4806 & 9929 & 63552\\
AP-24 & 373188 & 15353 & 33894 & 337385 & 4907 & 10103 & 60814\\
AP-25 & 467653 & 18834 & 41821 & 421181 & 6208 & 12816 & 83438\\
AP-26 & 566948 & 22507 & 50252 & 511401 & 8025 & 16625 & 100494\\
AP-27 & 571298 & 22777 & 50801 & 514978 & 8155 & 16890 & 107442\\
AP-28 & 669336 & 26488 & 59201 & 602737 & 9941 & 20616 & 132120\\
AP-36 & 324835 & 9870 & 21502 & 297160 & 3084 & 6306 & 37744\\
AP-37 & 490408 & 14826 & 32921 & 442256 & 4266 & 8696 & 61362\\
AP-38 & 487033 & 14678 & 32793 & 438457 & 4263 & 8682 & 58928\\
AP-39 & 654787 & 20501 & 45166 & 598421 & 6352 & 12965 & 89294\\
AP-40 & 656469 & 20486 & 45150 & 599396 & 6351 & 12956 & 87744\\
AP-41 & 653096 & 20241 & 44709 & 588884 & 5846 & 11914 & 84628\\
AP-50 & 2613736 & 76180 & 171944 & 2353222 & 34538 & 71644 & -\\
\cline{1-8}
\end{tabular}
\end{table}

It is worth mentioning that our implementation of multiclique covering has been employed in a cost-optimal planner in ASP  \cite{spies2019}. That is, all the experiment results reported in \cite{spies2019} for that planner used this implementation for the representation of mutex constraints, where every 
plan produced by the planner was validated by the Strathclyde Planning
Group plan verifier VAL \cite{howey2004val}.

\section{Related Work}
\label{related} 
In \cite{rintanen2006compact}, an algorithm called \textsc{identify-biclique} is presented. Given a graph $G = (V,E)$, the algorithm starts with the trivial biclique $\emptyset$, $V$, and repeatedly adds nodes to the first part. Nodes from the second part are removed if there is no edge between them and the new node in the first part. The nodes are chosen to maximize the size reduction. The algorithm terminates when the size reduction is no longer possible. 

Our algorithm on multiclique covering is a natural extension of the \textsc{identify-biclique}
algorithm with some key differences.

\begin{itemize}
  \item We're generating multicliques rather than bicliques so there can be
    more than two partitions. In contrast with Rintanen's explicit construction of two partitions, which is possible and convenient because of the limit on two, we generate partitions for a multiclique based on a graph-theoretic property.

  \item As commented earlier, instead of \emph{removing} edges from the graph once they've been
    assigned to a multiclique, we keep a separate record of ``uncovered'' edges
    which still remain to be assigned. The multiple uses of the same covered edges  can sometimes further minimize the encoding in ASP.
  \item Instead of optimizing for $|\mbox{Clauses in naive encoding}| - |\mbox{Clauses in our encoding}|$,
    we're optimizing for $|\mbox{\emph{Literals} in naive encoding}| - |\mbox{\emph{Literals} in our encoding}|$.\footnote{When dealing with strictly binary clauses (as in Rintanen's case),
    these behave identically since latter metric is just the former multiplied
    by two.}
\end{itemize}

Although in this paper we have focused on reducing grounding size for ASP planning, our algorithm can be applied to other applications in other knowledge representation languages. We note that in some logic-based knowledge representation languages, such as SAT, encoding multicliques may not be as convenient and compact as can be done in ASP though.  This gives ASP a major advantage. 

The grounding bottleneck for constraints has been tackled
in the literature with different approaches, e.g., 
by Cuteri et al. \cite{DBLP:journals/tplp/CuteriDRS17}, where
mutex constraints can be added on demand. A comparison with our approach merits a further investigation.

\section{Conclusion and Final Remarks}
\label{conclusion}
Mutex constraints can significantly contribute to the overall grounding size of a planning problem. These constraints can be represented by a mutex graph where vertices are fluents and edges represent exclusiveness.
In this paper, we address this problem by proposing an algorithm for a  multiclique covering from a given mutex graph. As computing a minimum multiclique covering from a mutex graph is NP-hard, we propose an intuitive, approximation algorithm and show experimentally that it generates substantially smaller grounding size for mutex constraints in ASP than the previously known work in SAT. 

Like \cite{rintanen2006compact}, our approximation algorithm does not provide any approximation guarantees. A question of interest is whether such a guarantee can be formulated and proved.

The benchmark used in the experiments reported in this paper is limited to the AIRPORT problem. Experiments using more benchmarks are needed. Following Rintanen \cite{rintanen2006compact},  our goal is to seek smaller grounding sizes, under the assumption that smaller grounding sizes are better. This assumption may not hold true in general in all cases, and needs to be tested out by more experiments.

\bibliographystyle{eptcs}
\bibliography{david}
\end{document}